\newtheorem{thm}{Theorem}
\newtheorem{rem}{Remark}
\newtheorem{proof}{Proof}
\newtheorem{assu}{Assumption}
\newcommand{\tr}{\mathop{\mathrm{tr}}}
\begin{document}

\title{Learning and Current Prediction of PMSM Drive via Differential Neural Networks}

\author{Wenjie~Mei,~\IEEEmembership{Member,~IEEE,}
        Xiaorui~Wang,
        Yanrong~Lu,
        Ke~Yu
        and Shihua~Li,~\IEEEmembership{Fellow,~IEEE} 
	\thanks{Wenjie Mei, Ke Yu, and Shihua Li are with the School of Automation and the Key Laboratory of MCCSE of the Ministry of Education, Southeast University, Nanjing 210096, China. Xiaorui Wang and Yanrong Lu are with the School of Electrical and Information Engineering, Lanzhou University of Technology, Lanzhou 730050, China. } 

}  

\markboth{Journal of \LaTeX\ Class Files,~Vol.~14, No.~8, August~2021}%
{Shell \MakeLowercase{\textit{et al.}}: A Sample Article Using IEEEtran.cls for IEEE Journals}


\maketitle

\begin{abstract}

Learning models for dynamical systems in continuous time is significant for understanding complex phenomena and making accurate predictions. This study presents a novel approach utilizing differential neural networks (DNNs) to model nonlinear systems, specifically permanent magnet synchronous motors (PMSMs), and to predict their current trajectories. The efficacy of our approach is validated through experiments conducted under various load disturbances and no-load conditions. The results demonstrate that our method effectively and accurately reconstructs the original systems, showcasing strong short-term and long-term prediction capabilities and robustness. This study provides valuable insights into learning the inherent dynamics of complex dynamical data and holds potential for further applications in fields such as weather forecasting, robotics, and collective behavior analysis.
\end{abstract}

\begin{IEEEkeywords}
Dynamical systems, predictions, differential neural networks, permanent magnet synchronous motors. 
\end{IEEEkeywords}

\section{Introduction}
\IEEEPARstart{T}{he} recent advancements in machine learning techniques have significantly expanded opportunities for the automated extraction of knowledge and patterns from data. Deep learning models, renowned for their robust approximation capabilities, have achieved substantial progress across various fields. This progress has inspired researchers to investigate the application of neural networks for uncovering the dynamical laws underlying observational data \cite{lsh, NN2, raissi2019physics}. This exploration represents a novel paradigm for enhancing the learning and prediction of complex physical models.

Differential neural networks (DNNs) \cite{DNN,sarigul2019differential,chen2018neural} have been regarded as a powerful deep learning methodology for modeling dynamical systems, especially those in continuous time. These networks conceptualize the system state as a continuously evolving latent variable and utilize neural networks to approximate the changing rate of this state. By numerically integrating from the initial state, one can derive the system trajectory. This technique has demonstrated exceptional performance in both modeling and prediction across a variety of domains. Beyond their capacity to handle multimodal data, including text, images, and videos \cite{park2021vid, zhang2022metanode}, DNNs have proven effective in various engineering applications \cite{kidger2020neural, de2019gru, rubanova2019latent}, bio-medicine \cite{qian2021integrating,  bram2024low}, and particularly in engineering for modeling dynamical systems with unknown components \cite{norcliffe2020second, lou2020neural}.

This brief focuses on the challenge of learning and predicting currents in permanent magnet synchronous motor (PMSM) to ensure safety guarantees for these currents during the prediction period. PMSMs have gained significant attention in industrial applications, such as aerospace, robotic manipulators, and electric vehicles \cite{PMSM2,liu2016research}, due to their notable advantages, including simple structure, high power density, high torque-to-inertia ratio, and efficiency \cite{kk}. The PMSM is inherently a nonlinear system, characterized by complex parameter interactions and intertwined disturbances. To this end, different neural networks are proposed and applied to approximate the unknown dynamics of  PMSM model (see, for example, \cite{NN3, NN31, NN32}).

However, most existing neural network methods take discrete-time architectures and often assume that they can perfectly learn continuous-time characteristics, deviating from the inherent continuous nature commonly present in the real physical world, such as rigid body systems, chemical processes, and electrical circuits. The continuous dynamic property not only embodies rich prior knowledge but also has a decisive influence on the long-term behavior of the dynamics.  To ensure the safety of  PMSM operation, it is generally necessary to set the current limit in the software. In practice, current sensors can only acquire information about the current in real time, but cannot predict the current. Therefore, it is of significance to forecast the current evolution. 


This work addresses the challenge of achieving high-accuracy predictions for PMSMs to ensure safety guarantees. The primary theoretical contribution of this study is the development of practical learning laws for updating the weight matrices of our proposed DNN, enabling it to accurately approximate the ideal model. Additionally, the experimental results demonstrate that our model outperforms other widely used neural networks, such as convolutional neural networks and transformers, in the tested scenarios, which include no-load conditions and various types of load disturbances.
 
The remainder of this brief is organized as follows. Section \ref{sec:Preliminaries} presents the formulation of the dynamics of PMSMs, as well as the research objective. The main theoretical results are presented in Section \ref{sec:dnn}. The experiments on the prediction of currents are given in Section \ref{sec:experiment}. The conclusion is drawn in Section \ref{sec:conclusion}, along with limitations and future work.

\section{Preliminaries}\label{sec:Preliminaries}
The dynamic  mathematical model of surface-mounted PMSM in the 
synchronous rotational $dq$ reference frame can be presented as
\begin{equation}\label{eq:PMSM}
\left\{
\begin{aligned}
   &\dot{i}_d\!=\!\frac{1}{L_s}\left(u_d-R_s{i}_{d}+\omega_ei_q\right)+\!\Delta \theta_d(i_d,i_q), \\
   &\dot{i}_q\!=\!\frac{1}{L_s}\left[u_q-R_s{i}_{q}-\omega_e(L_si_d+\psi_f)\right] +\!\Delta \theta_q(i_d, i_q),\\
   &\dot{\omega}_m=\frac{3\psi_fn_p}{2J}i_q-\frac{{T}_{L}}{J}-\frac{B}{J}\omega_m,
\end{aligned}
\right.
\end{equation}
where $R_s$ is the stator resistance, $L_s$ is the inductance, $n_p$ is the number of pole pair, $\psi_f$ is the permanent-magnet (PM) flux linkage, $i_{d}$, $i_q$, $u_{d}$ and $u_q$ represent the stator phase currents and terminal voltages in the $dq$-axis, respectively.  $\omega_e$ is the electrical angular velocity, $\omega_m=\omega_e/n_p$ is the mechanical angular velocity. The parameters $J$, $B$, $T_e$, and $T_L$ are the moment of inertia, viscous friction coefficient, electromagnetic torque, and load torque, respectively.  $\Delta \theta_d(\cdot,\cdot)$ and $\Delta \theta_q(\cdot,\cdot)$ are the cumulative uncertainties, which can lead to inaccurate system modeling as well as 
poor control performance.

In this work, our objective is to achieve high prediction accuracy for the currents $i_d$  and $i_q$ using the proposed DNN model, thereby enabling precise forecasting of their future evolution. This capability is vital for assessing the operational safety of PMSMs, addressing a critical demand in various applications, which constitutes the primary motivation behind our work.

\section{Differential neural networks for Learning and Prediction} \label{sec:dnn}
Let us consider the following simple DNN \cite{mei2024controlsynth,mei2024annular}, which are assumed to have abilities to learn the dynamics~\eqref{eq:PMSM}: 
\begin{equation} \label{eq:DDNN}
    \dot{x} =  a_0 x + (a_1^*)^\top f_1(s_1^* x) + (a_2^*)^\top f_2(s_2^* x) , \quad x_0 := x(0),
\end{equation}
where $x(t) \in \mathbb{R}^2$ is the state vector, $a_0 \in \mathbb{R}^{2 \times 2}$ is a prescribed deterministic matrix, $a_1^*, a_2^* \in \mathbb{R}^{z \times 2}$ are the weight matrices of the output layer, the functions $f_1, f_2 \colon \mathbb{R}^{r} \to \mathbb{R}^{z}$ are suitably selected activation functions, and $s_1^*, s_2^* \in \mathbb{R}^{r \times 2}$. 

We propose the following DNN to identify the neural networks~\eqref{eq:DDNN}: 
\begin{equation} \label{eq:id_nn}
    \dot{\hat{x}} =  a_0 \hat{x} + a_1^\top f_1(s_1 \hat{x}) + a_2^\top f_2(s_2 \hat{x}) , \quad \hat{x}_0 := \hat{x}(0).
\end{equation}

To present the main result of this work, some necessary assumptions are given as follows. 

\begin{assu} \label{assum:lip}
    The activation functions $f_i$ satisfy the Lipschitz continuity conditions, \emph{i.e.}, there exist positive numbers $L_{f_i} < +\infty$ such that
    \[
    \| f_i(c_1) - f_i(c_2) \|^2 \leq L_{f_i} \| c_1-c_2\|^2
    \]
    for all $i \in \{ 1,2\}$ and $c_1,c_2 \in \mathbb{R}^r$. 
\end{assu}

\begin{assu} \label{assum:weight}
    The norms of the weight matrices  $a_i^*, s_i^*$ are bounded: 
    \[
    \| a_i^* \|^2 \leq \overline{a_i^* }< +\infty, \quad \| s_i^* \|^2 \leq \overline{s_i^* }< +\infty, \quad \forall i \in \{ 1,2\}.
    \] 
\end{assu}

We are ready to give the following theorem, which involves asymptotic convergence of the state identification error (between the DNN~\eqref{eq:id_nn} and the model~\eqref{eq:DDNN}) under our proposed learning laws.

\begin{thm} \label{thm:error}
Suppose that the neural networks~\eqref{eq:DDNN} can approximate the dynamics~\eqref{eq:PMSM} well and let the DNN identify~\eqref{eq:DDNN} with the learning laws  
\begin{align}
\dot{a}_i  & = c_i p (\mathcal{D}_{f_i})  \tilde{s}_i \hat{x}  e^\top - c_i p f_i(s_i\hat{x}) e^\top, \nonumber \\
\dot{s}_i  & = - d_i  p (\mathcal{D}_{f_i})^\top a_i e  \hat{x}^\top, \quad i \in \{1,2\} \label{eq:learning_law},
\end{align}
where $p, c_i, d_j$ are positive scalars, $e = \hat{x} - x$, $\tilde{s}_i = s_i - s_i^*$, and $\mathcal{D}_{f_i} = \frac{\partial f_i(\cdot)}{\partial (\cdot)}$. 
If there exists a scalar $C>0$ such that $p>0$ (the same as that in Equation~\eqref{eq:learning_law}) is a solution of the inequality 
$ 
p^2\ell I_2 + 2p a_0 + \beta I_2 \leq -CI_2
$
holds true, where $\ell:= \sum_{i=1}^2 \overline{a_i^*}, \; \beta := \sum_{i=1}^2 L_{f_i} \overline{s_i^*} >0$, and $I_2$ is the $2 \times 2$ identity matrix. Then, the error  $e$ has a locally asymptotic equilibrium point at the origin, \emph{i.e.}, $\lim_{t \to +\infty} e(t) =0$. 

\end{thm}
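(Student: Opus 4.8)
The plan is to build a composite Lyapunov function that penalizes both the state identification error $e=\hat x-x$ and the parameter errors $\tilde a_i := a_i - a_i^*$ and $\tilde s_i := s_i - s_i^*$, with the relative weights tuned to the learning-law gains, namely
\begin{equation*}
V = p\,e^\top e + \sum_{i=1}^{2}\frac{1}{c_i}\tr(\tilde a_i^\top \tilde a_i) + \sum_{i=1}^{2}\frac{1}{d_i}\tr(\tilde s_i^\top \tilde s_i).
\end{equation*}
This $V$ is positive definite and radially unbounded in $(e,\tilde a_i,\tilde s_i)$. First I would subtract \eqref{eq:DDNN} from \eqref{eq:id_nn} to obtain the error dynamics $\dot e = a_0 e + \sum_i[a_i^\top f_i(s_i\hat x) - (a_i^*)^\top f_i(s_i^* x)]$, and then differentiate $V$ along the trajectories, noting $\dot{\tilde a}_i=\dot a_i$ and $\dot{\tilde s}_i=\dot s_i$.

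The core of the argument is to split each nonlinear mismatch so that the learning laws \eqref{eq:learning_law} cancel every term that is linear in the parameter errors. Writing $a_i^\top = (a_i^*)^\top + \tilde a_i^\top$ gives $a_i^\top f_i(s_i\hat x)-(a_i^*)^\top f_i(s_i^* x)=\tilde a_i^\top f_i(s_i\hat x)+(a_i^*)^\top[f_i(s_i\hat x)-f_i(s_i^* x)]$, and the inner difference is split further as $[f_i(s_i\hat x)-f_i(s_i^*\hat x)]+[f_i(s_i^*\hat x)-f_i(s_i^* x)]$. A first-order expansion of the first bracket yields $f_i(s_i\hat x)-f_i(s_i^*\hat x)=\mathcal{D}_{f_i}\tilde s_i\hat x$ (up to a remainder discussed below). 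Substituting the learning laws and using $\tr(uv^\top)=v^\top u$, the $f_i(s_i\hat x)$ part of $\dot a_i$ cancels $2p\,e^\top\tilde a_i^\top f_i(s_i\hat x)$, while the three contributions carrying the factor $\mathcal{D}_{f_i}\tilde s_i\hat x$ — one from $e^\top(a_i^*)^\top\mathcal{D}_{f_i}\tilde s_i\hat x$, one from the first term of $\dot a_i$, and one from $\dot s_i$ — combine with coefficient $(a_i^*)^\top+\tilde a_i^\top-a_i^\top=0$ and vanish identically. This exact telescoping is precisely what the particular form of \eqref{eq:learning_law} is engineered to produce.

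After the cancellations, only state-error terms survive: $2p\,e^\top a_0 e$ together with $2p\sum_i e^\top(a_i^*)^\top[f_i(s_i^*\hat x)-f_i(s_i^* x)]$. On the latter I would apply Young's inequality followed by Assumptions~\ref{assum:lip}--\ref{assum:weight}, bounding $2p\,e^\top(a_i^*)^\top[\,\cdots]\le p^2\overline{a_i^*}\,e^\top e+L_{f_i}\overline{s_i^*}\,e^\top e$, so that summing over $i$ produces $p^2\ell\,e^\top e+\beta\,e^\top e$. Since $e^\top a_0 e$ depends only on the symmetric part of $a_0$, this gives $\dot V\le e^\top(p^2\ell I_2+2p a_0+\beta I_2)e\le -C\,e^\top e$ by the hypothesized matrix inequality.

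Finally, $\dot V\le -C\|e\|^2\le 0$ shows $V$ is nonincreasing, so $e,\tilde a_i,\tilde s_i$ remain bounded; integrating yields $e\in L_2$, and boundedness of $\dot e$ then lets Barbalat's lemma conclude $e(t)\to 0$. I expect the main obstacle to be rigor in the expansion $f_i(s_i\hat x)-f_i(s_i^*\hat x)=\mathcal{D}_{f_i}\tilde s_i\hat x$: the mean-value form only provides the Jacobian at an intermediate point, so one must either absorb the remainder into the Lipschitz/Young estimates or restrict to a neighborhood of the origin — which is consistent with the \emph{local} nature of the claimed convergence and with the domain on which the stated inequality can be solved for $p$.
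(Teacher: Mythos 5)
Your proposal is correct and follows essentially the same route as the paper's own proof: the identical Lyapunov function $V = p\,e^\top e + \sum_i c_i^{-1}\tr(\tilde a_i^\top\tilde a_i) + \sum_i d_i^{-1}\tr(\tilde s_i^\top\tilde s_i)$, the same three-way splitting of $a_i^\top f_i(s_i\hat x)-(a_i^*)^\top f_i(s_i^*x)$, the same Young/Lipschitz bound producing $p^2\ell+\beta$, the same trace-based cancellation enforced by the learning laws (the paper packages it as $L_{a_i}=L_{s_i}=0$, you as the coefficient $(a_i^*)^\top+\tilde a_i^\top-a_i^\top=0$, which is the same computation), and the same Barbalat conclusion. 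Your added care on the Barbalat step ($e\in L_2$ plus boundedness of $\dot e$) and your flag about the mean-value remainder in $f_i(s_i\hat x)-f_i(s_i^*\hat x)=\mathcal{D}_{f_i}\tilde s_i\hat x$ are, if anything, points where you are more rigorous than the paper.
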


\begin{proof}
Consider the Lyapunov function
\[
V = pe^\top e + \sum_{i=1}^2 c_i^{-1}  \tr\{\tilde{a}_i^\top \tilde{a}_i\} + \sum_{j=1}^2 d_j^{-1} \tr\{\tilde{s}_j^\top \tilde{s}_j\},
\]
where $p$ is a positive scalar, $\tilde{a}_i = a_i - a_i^*$, and  $\tr$ is the trace operator. Taking the time derivative of $V$, we obtain
\[
\dot{V} = 2 p e^\top \dot{e} + 2 \sum_{i=1}^2 c_i^{-1} \tr\{\tilde{a}_i^\top \dot{a}_i \}  +  2 \sum_{j=1}^2 d_j^{-1} \tr\{\tilde{s}_j^\top \dot{s}_j\},
\]
where
\begin{align}
\dot{e} =  a_0 e & + \Big[a_1^\top f_1(s_1 \hat{x})  - (a_1^*)^\top f_1(s_1^* x)  \Big]  \nonumber \\
 & + \Big[ a_2^\top f_2(s_2 \hat{x}) - (a_2^*)^\top f_2(s_2^* x)   \Big]  \label{eq:error_dot}.
\end{align}
Here, we add and subtract some terms for sequential analysis, shown as follows: 
\begin{align}
& a_i^\top f_i(s_i \hat{x})  - (a_i^*)^\top f_i(s_i^* x) \nonumber \\
= &  \tilde{a}_i^\top f_i(s_i\hat{x}) +  (a_i^*)^\top  \Big[ f_i(s_i\hat{x}) -  f_i(s_i^* \hat{x})\Big] \nonumber  \\
&  +  (a_i^*)^\top  \Big[ f_i(s_i^* \hat{x}) -  f_i(s_i^* x) \Big], \quad \forall i \in \{ 1, 2\}.  \nonumber 
\end{align}
Therefore, based on the above formulations, the relation~\eqref{eq:error_dot} can be rewritten to 
\begin{align}
2pe^\top \dot{e} =& 2 p e^\top a_0 e + 2p e^\top \tilde{a}_1^\top f_1(s_1\hat{x}) + 2p e^\top \tilde{a}_2^\top f_2(s_2 \hat{x}) \nonumber  \\ 
& +  2p e^\top  (a_1^*)^\top  \Big[ f_1(s_1\hat{x}) -  f_1(s_1^* \hat{x})\Big] \nonumber \\ 
& + 2p e^\top  (a_1^*)^\top  \Big[ f_1(s_1^* \hat{x}) -  f_1(s_1^* x) \Big] \nonumber\\
& +  2p e^\top (a_2^*)^\top  \Big[ f_2(s_2\hat{x}) -  f_2(s_2^* \hat{x})\Big]\nonumber \\
& + 2p e^\top (a_2^*)^\top  \Big[ f_2(s_2^* \hat{x}) -  f_2(s_2^* x) \Big]. \nonumber
\end{align}
By Assumptions~\ref{assum:lip} and~\ref{assum:weight}, we see that
\begin{align}
 & 2 p e^\top (a_i^*)^\top  \Big[ f_i(s_i^* \hat{x}) -  f_i(s_i^* x)\Big]  \nonumber  \\ 
 \leq &   p^2 \| e \|^2 \| a_i^* \|^2 + \| f_i(s_i^* \hat{x}) -  f_i(s_i^* x) \|^2   \nonumber  \\
    \leq & \| e \|^2 \Big[ p^2 \overline{ a_i^* } + L_{f_i} \overline{s_i^*} \Big].   \nonumber
\end{align} 
Also, with the Mean Value Theorem, it holds that 
\begin{align}
&p e^\top (a_i^*)^\top  \Big[ f_i(s_i\hat{x}) -  f_i(s_i^* \hat{x}) \Big] \nonumber \\  =  & p e^\top a_i^\top (\mathcal{D}_{f_i}) \tilde{s}_i \hat{x}  \nonumber - p e^\top (\tilde{a}_i)^\top (\mathcal{D}_{f_i})  \tilde{s}_i \hat{x}, 
\end{align} 
where 
$
 p e^\top a_i^\top \mathcal{D}_{f_i} \tilde{s}_i \hat{x}  =  \tr\{\tilde{s}_i^\top (\mathcal{D}_{f_i})^\top a_i e p^\top \hat{x}^\top \},  \;
 p e^\top \tilde{a}_i^\top \mathcal{D}_{f_i} $ $ \tilde{s}_i \hat{x} = \tr\{ \tilde{a}_i^\top \mathcal{D}_{f_i} 
 \tilde{s}_i \hat{x}  p e^\top \}, \;
 pe^\top \tilde{a}_i^\top  f_i(s_i\hat{x})  =  \tr\{   \tilde{a}_i^\top f_i(s_i\hat{x})  $ $  p e^\top \}
$
(the property of trace invariance under cyclic permutations is applied). Therefore, under some useful operations, one can deduce 
\begin{align}
\dot{V} \leq & e^\top \Big( p^2 \ell I_2 + 2p a_0 + \beta I_2 \Big) e   \nonumber \\
& + 2 \sum_{i=1}^2 \tr\{ \tilde{a}_i^\top L_{a_i}\} + 2 \sum_{j=1}^2 \tr\{\tilde{s}_i^\top L_{s_i} \},  \nonumber
\end{align}
where the definitions of $\ell, \beta$ can be found in the statement of Theorem~\ref{thm:error}, and 
$
L_{a_i}  = c_i^{-1} \dot{a}_i - p (\mathcal{D}_{f_i})  \tilde{s}_i \hat{x}  e^\top +   p f_i(s_i\hat{x}) e^\top,  \;
L_{s_i}  = d_i^{-1} \dot{s}_i +  p (\mathcal{D}_{f_i})^\top a_i e  \hat{x}^\top, \quad i \in \{1,2\}.
$
Thus, under the relations~\eqref{eq:learning_law} that make $L_{a_i}$ and $L_{s_i}$ zeros, one can readily obtain 
\[
\dot{V} \leq -C \| e \|^2. 
\]
Then, by Barbalat’s Lemma, it is direct to substantiate that $\lim_{t \to +\infty} e(t) =0$. This completes the proof. 
\end{proof}

\begin{rem}
To ensure the existence of a positive solution \( p > 0 \) in Theorem~\ref{thm:error}, we set the matrix \( a_0 \) to have only real eigenvalues, with \(  \eta \) denoting its maximum eigenvalue. Then, the inequality 
\begin{equation} \label{eq:guarantee}
    p^2 \ell + 2p  \eta + \beta + C \leq 0
\end{equation}
represents the condition to be verified, as it follows from the matrix inequality
$
p^2 \ell I_2 + 2p a_0 + \beta I_2 \leq -C I_2 \quad \Longleftrightarrow \quad p^2 \ell I_2 + 2p a_0 + (\beta + C) I_2 \leq 0 \quad \Longleftrightarrow \quad y^\top \left( p^2 \ell I_2 + 2p a_0 + (\beta + C) I_2 \right) y \leq 0, \; \|y\| = 1
$.
By the solution formula of quadratic
inequalities, one can obtain that the following conditions ensure the existence of \( p > 0 \) that satisfies the inequality~\eqref{eq:guarantee}:
\[
\left\{
\begin{array}{ll} 
\frac{-2 \eta + \sqrt{\Delta}}{2\ell} > 0, & \text{if } \Delta := 4 \eta^2 - 4\ell(\beta + C) > 0, \\
 \eta < 0, & \text{if } \Delta = 0.
\end{array} 
\right.
\]
Here, the parameters \( a_0 \) (its maximum eigenvalue: $\eta$), \( \ell \), and \( \beta \) are dependent on the specific characteristics of the DNN defined in~\eqref{eq:DDNN}, and thus can be adjusted accordingly.

\end{rem}

\section{Experiments} \label{sec:experiment}
\subsection{Experimental Setup}
The setup and configuration of a semi-physical experimental platform for DNN training are depicted in Fig. \ref{fig.8}, which consist of four parts: the motor experimental unit, the hardware drive circuits, the power supply, and the host computer with MATLAB/Simulink and Pycharm. The load motor is mechanically coupled with the tested PMSM, and they are simultaneously driven and controlled by the hardware drive circuits. Furthermore, the driving hardware circuit consists of a DSP motion control board (TMS320F280039C), a signal transfer board (TI DRV8305), and two driver boards.  Besides, the $i_d^{*}=0$ vector-control method for PMSM control is implemented by the MATLAB/Simulink software and automatically converted into the C-code for the main control board to execute. The nominal values of PMSM parameters are listed in Table \ref{tab:table1}. 

The state information of these two motors is sampled by the sampling module in the signal transfer board then sent to the main control board, and finally passed to the Pycharm software for DNN training and prediction. Gaussian process regression was employed for reducing uncertainty of experimental data and enhancing the modeling accuracy. All DNN experiments are conducted on a Windows 11 workstation with RAM 16GB, an Intel Core i9-14900KF CPU, and an NVIDIA GeForce RTX 4090 GPU (24 GB). The Pytorch 1.11.0 framework with Python 3.9 and CUDA 11.3 is adopted for GPU acceleration. 
\begin{table}[!htb]
\centering
\caption{The key parameters of PMSM\label{tab:table1}}
\begin{tabular}{| c | c | c |}
\hline
Parameters& Units &Values\\
\hline
Rated power $P$& W & 200\\
Rated speed $N_r$& rpm & 1500\\
Rated load $T_{L}$& N$\cdot$m & 1.2\\
Rated voltage $U_r$& V & 36\\
Rated current $I_r$& A & 7.5\\
Pole pairs  $N_p$& * & 5\\
Rotational inertia $J$&Kg$\cdot$m$^2$& 6.8$\times$ 10$^{-5}$\\
PM flux linkage $\psi_{f}$&Wb & 0.012\\
Stator resistance $R_s$ &$\Omega$& 0.4\\
Winding inductance $L_s$ &mH& 0.7\\
\hline
\end{tabular}
\end{table}

\begin{figure}[h]
    \centering    
\includegraphics[width=\hsize]{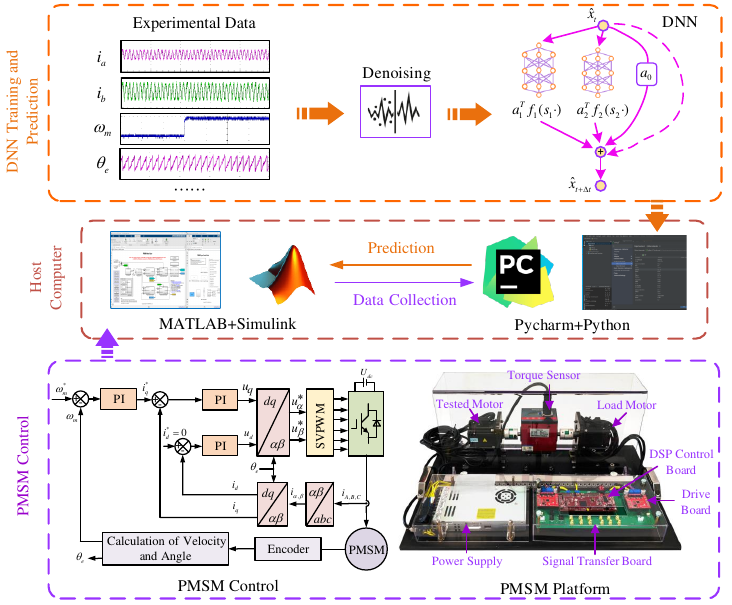}
    \caption{The current prediction framework of PMSMs with DNN.}\label{fig.8}
\end{figure}

\subsection{Test Results}
In this section, we conduct three sets of experiments to thoroughly evaluate the predictive performance of the proposed DNN model. The experimental procedures and corresponding results are detailed below, focusing exclusively on the prediction phase, with no training results presented.

To assess the performance of various commonly used neural network architectures in predicting currents in PMSM, we conducted a comparative analysis involving three models: \textbf{1) DNN}: This is our proposed model, designed to effectively capture nonlinear relations within the data. \textbf{2) Convolutional neural network (CNN)}: A widely adopted architecture, particularly effective for processing grid-structured data, such as images, by learning spatial hierarchies through its convolutional layers \cite{lecun1995convolutional}. \textbf{3) Transformer}: This model leverages self-attention mechanisms to efficiently process sequential data, enabling it to capture long-range dependencies without relying on recurrence, making it well-suited for tasks involving temporal data \cite{vaswani2017attention}.
In these experiments, the DNN consists of 4 fully connected layers, the CNN encompasses three convolutional layers and one linear layer, and the Transformer comprises 2 encoder layers. The learning rate is all set at 0.0001, and the Adam optimizer is employed for training.

\begin{figure}[!htb] 
    \centering    
\subfigure[]{
            \includegraphics[width=0.48\hsize]{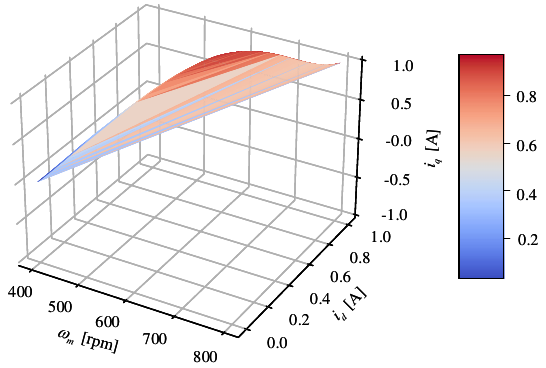}} 
\subfigure[]{
            \includegraphics[width=0.48\hsize]{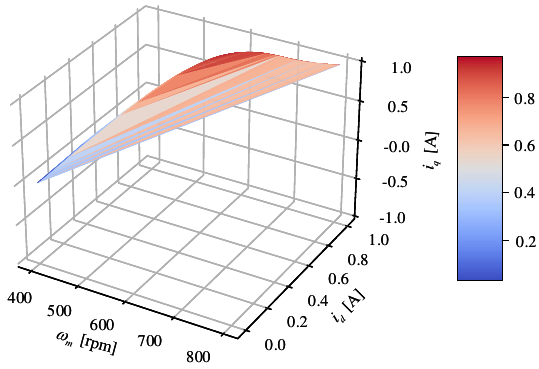}}\\   
\subfigure[]{
           \includegraphics[width=0.48\hsize]{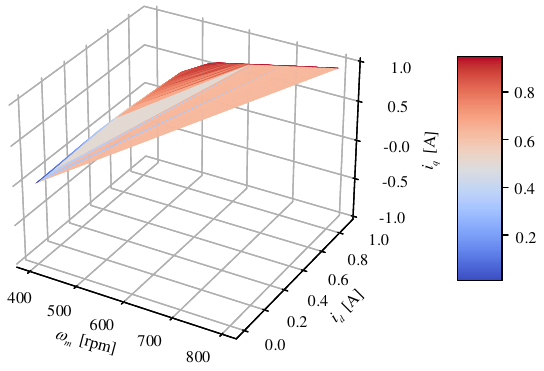}}
\subfigure[]{
           \includegraphics[width=0.48\hsize]{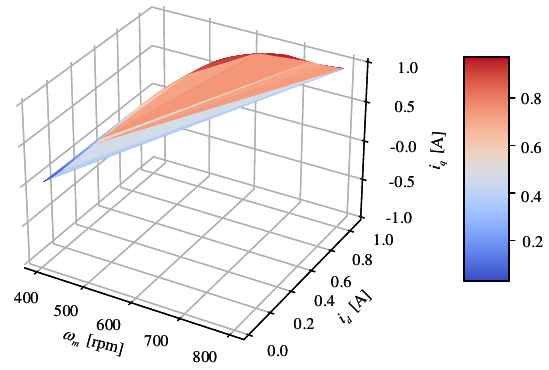}}\\ 
           \vspace{0.1cm}  
    \caption{Prediction result comparison among  (a) \emph{Ground Truth}. (b) \emph{DNN (Ours)}. (c) \emph{CNN}. (d) \emph{Transformer}. }\label{fig.14}
\end{figure}

\emph{Case I}: Fig.~\ref{fig.14} shows the prediction result comparison among ground truth, DNN, CNN, and Transformer with dynamic speed regulation under no-load condition. The speed reference of PMSM ramps up from 400 rpm to 800 rpm within 0.4 seconds. A total of 400 data points were collected in this 0.4 seconds. These data points were divided into two sets: the initial 300 points were utilized for model training, while the remaining 100 points were reserved for prediction purposes. This approach was designed to evaluate the predictive capabilities of the models under investigation. To evaluate the predictive performance of the neural networks, the following metrics were selected: Mean Absolute Error (MAE), Root Mean Squared Error (RMSE), and the Coefficient of Determination (R$^2$). The corresponding results are presented in Table \ref{tab:comparision：case I}. The findings indicate that the DNN outperforms the other models in prediction accuracy, primarily due to its continuous nature, which enhances its precision and adaptability to the dynamics of the PMSM, a system also characterized by continuous-time behavior.
\begin{table}[!t]
\vspace{0cm}  
\small
\centering
\caption{Loss/R$^2$ comparison among DNN, CNN, and Transformer}
  \renewcommand{\arraystretch}{1}
  \begin{tabular}{cccc}
 \toprule[1.2pt]
   \textbf{Model} &  \textbf{MAE}&\textbf{RMSE}&\textbf{R$^2$}\\ 
    \toprule[1.2pt]
    DNN (Ours)  &9.99e-5      &1.98e-4      &0.998\\
    CNN         & 2.81e-4    &4.07e-4       &0.995\\
    Transformer & 2.72e-4    &3.08e-4       &0.997 \\
  \hline 
   \end{tabular} \label{tab:comparision：case I}
  \vspace{0.1cm}  
\end{table}

\emph{Case II}: Fig. \ref{fig.15} presents a comparative analysis of the experimental results at 1000 rpm with 1 N$\cdot$m step load disturbance.  A total of 1000 data points were collected over 0.1 seconds, with the first 800 points utilized for model training and the remaining 200 points for prediction. 
The performance of three distinct models (\emph{i.e.}, DNN, CNN, and Transformer) was assessed, and their respective prediction accuracies were compared. The detailed performance metrics, including MAE, RMSE, and R$^2$, are provided in Table \ref{tab:comparison:case II}. These metrics collectively illustrate the superior performance of our proposed DNN model in comparison to the CNN and Transformer models.

Although the R$^2$ value of the DNN is only marginally higher than that of the Transformer, the DNN shows a clear advantage in terms of prediction accuracy, as evidenced by its significantly lower MAE and RMSE values. This indicates that the DNN not only offers a better overall fit but also reduces the prediction error more effectively than the competing models. These results highlight the robustness of our DNN in handling dynamic conditions, especially in the presence of disturbances.
\begin{figure}[!tb] 
    \centering    
\subfigure[]{
            \includegraphics[width=0.48\hsize]{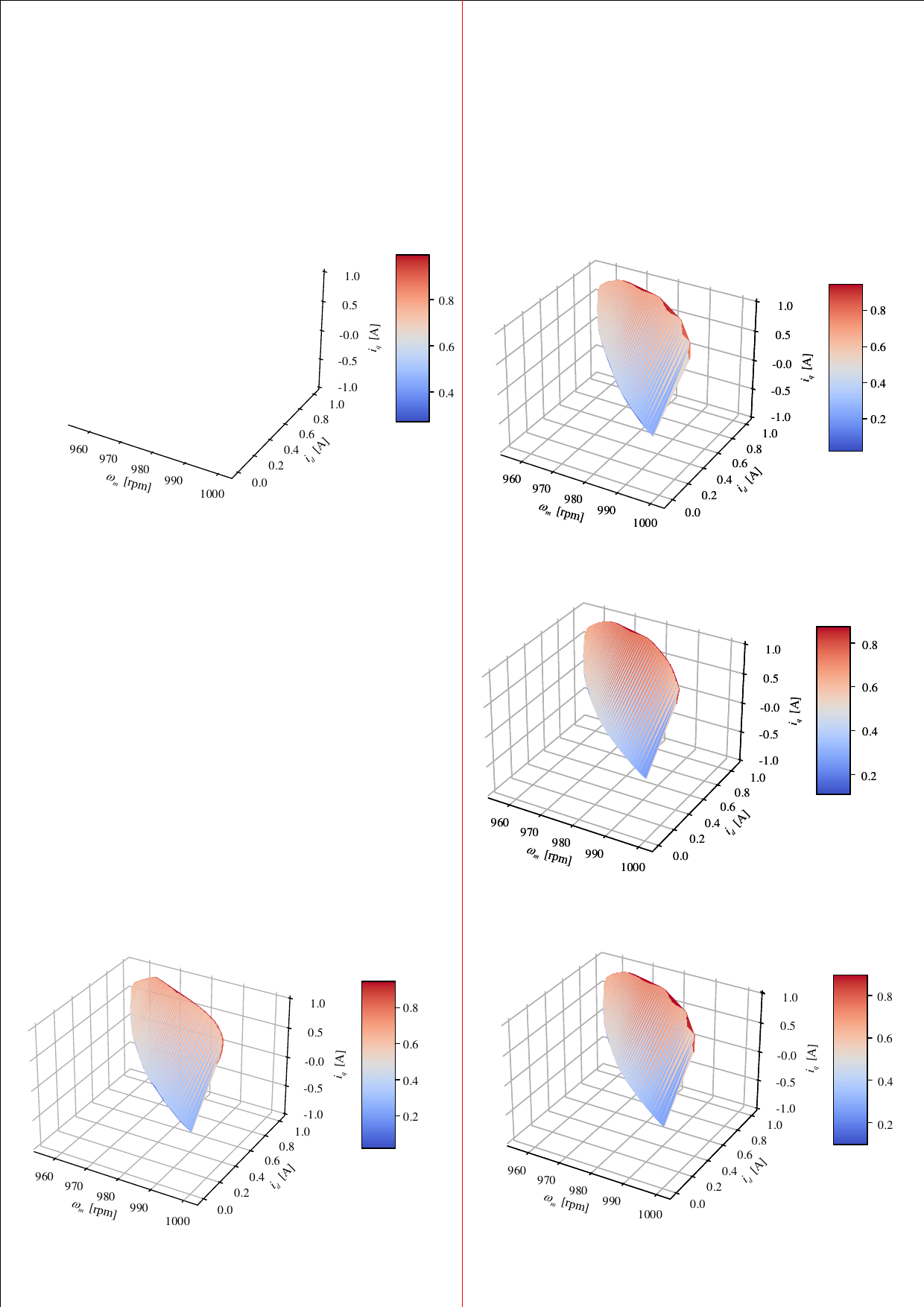}} 
\subfigure[]{
           \includegraphics[width=0.48\hsize]{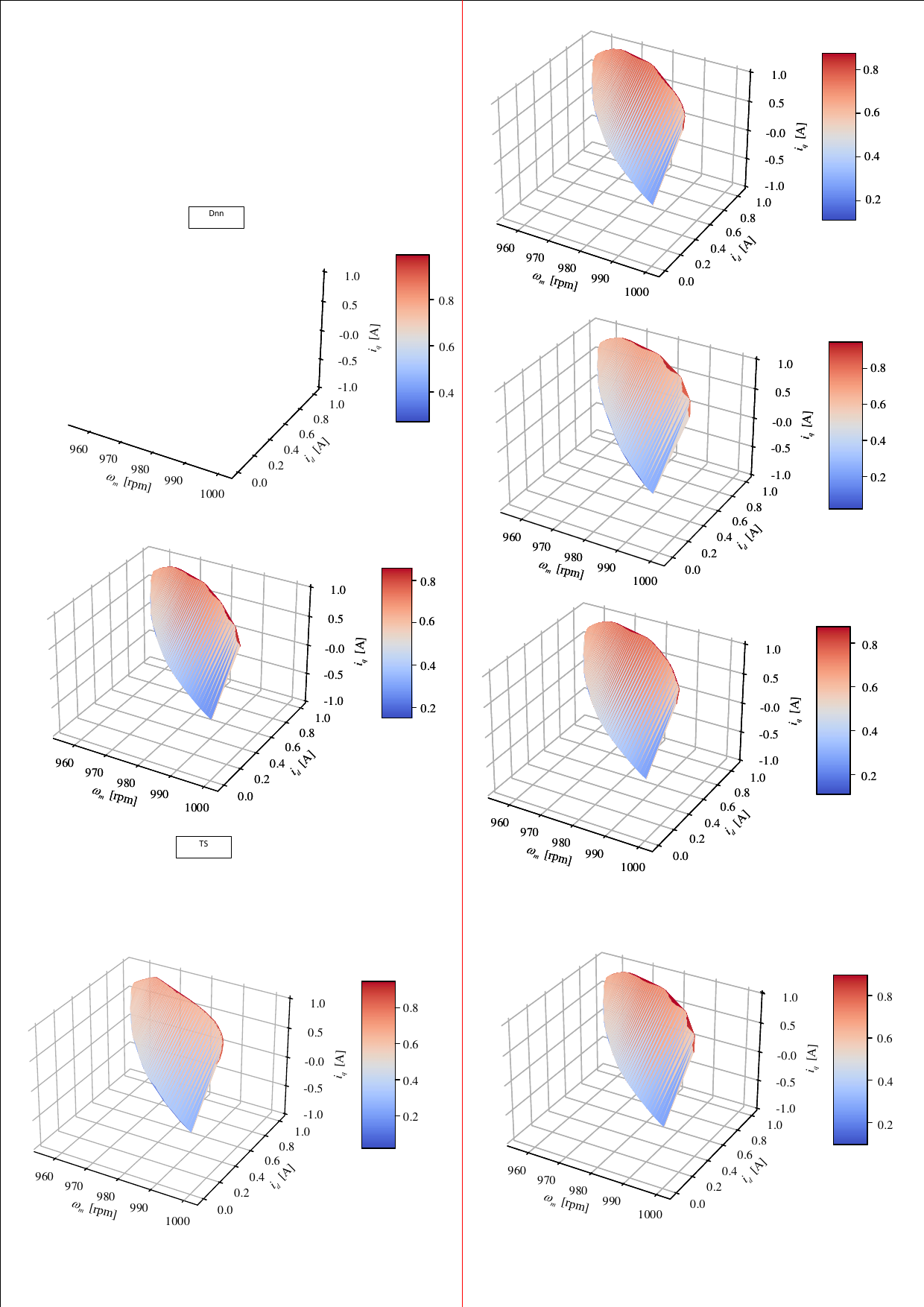}} \\ 
\subfigure[]{
           \includegraphics[width=0.48\hsize]{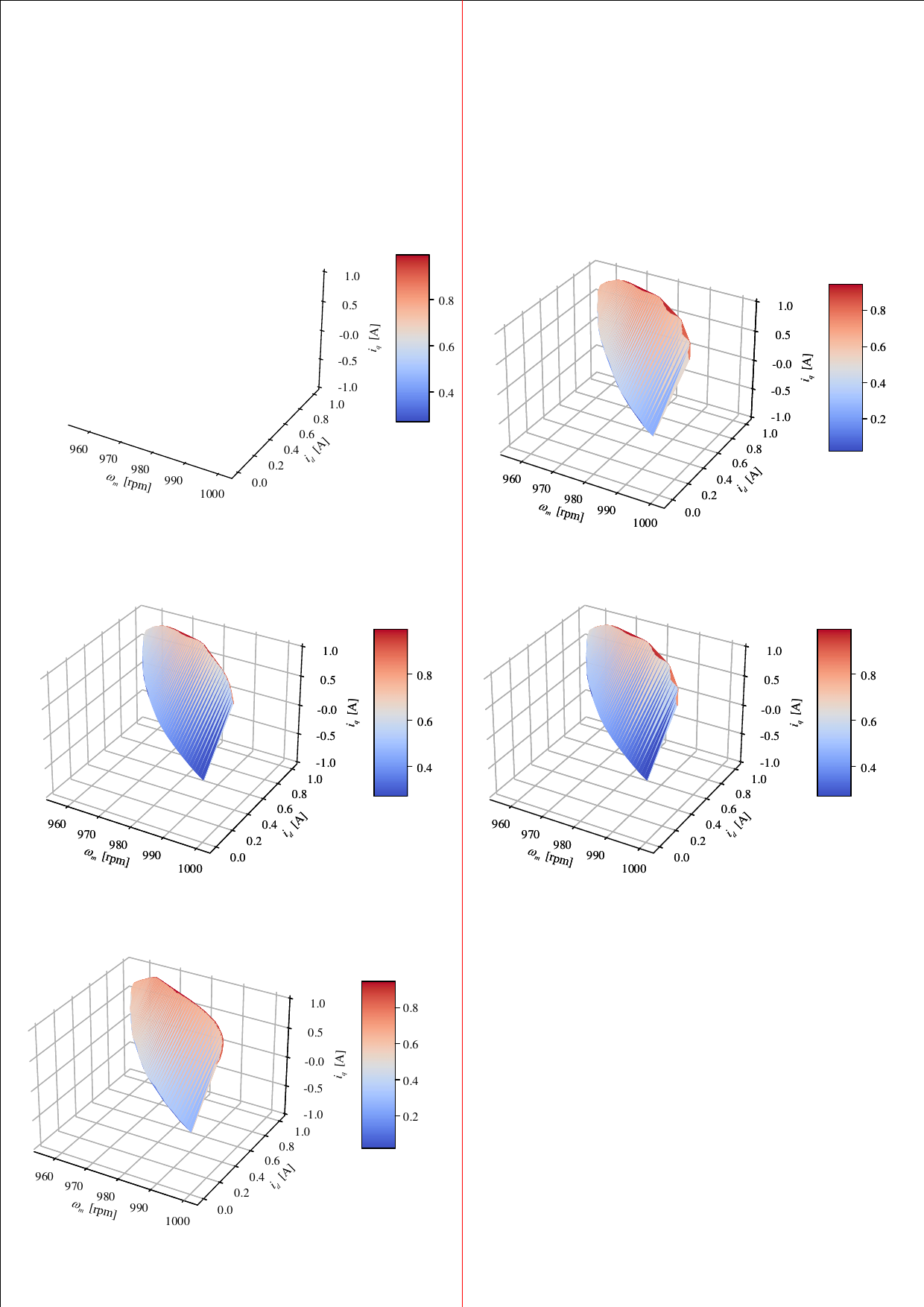}}
\subfigure[]{
            \includegraphics[width=0.48\hsize]{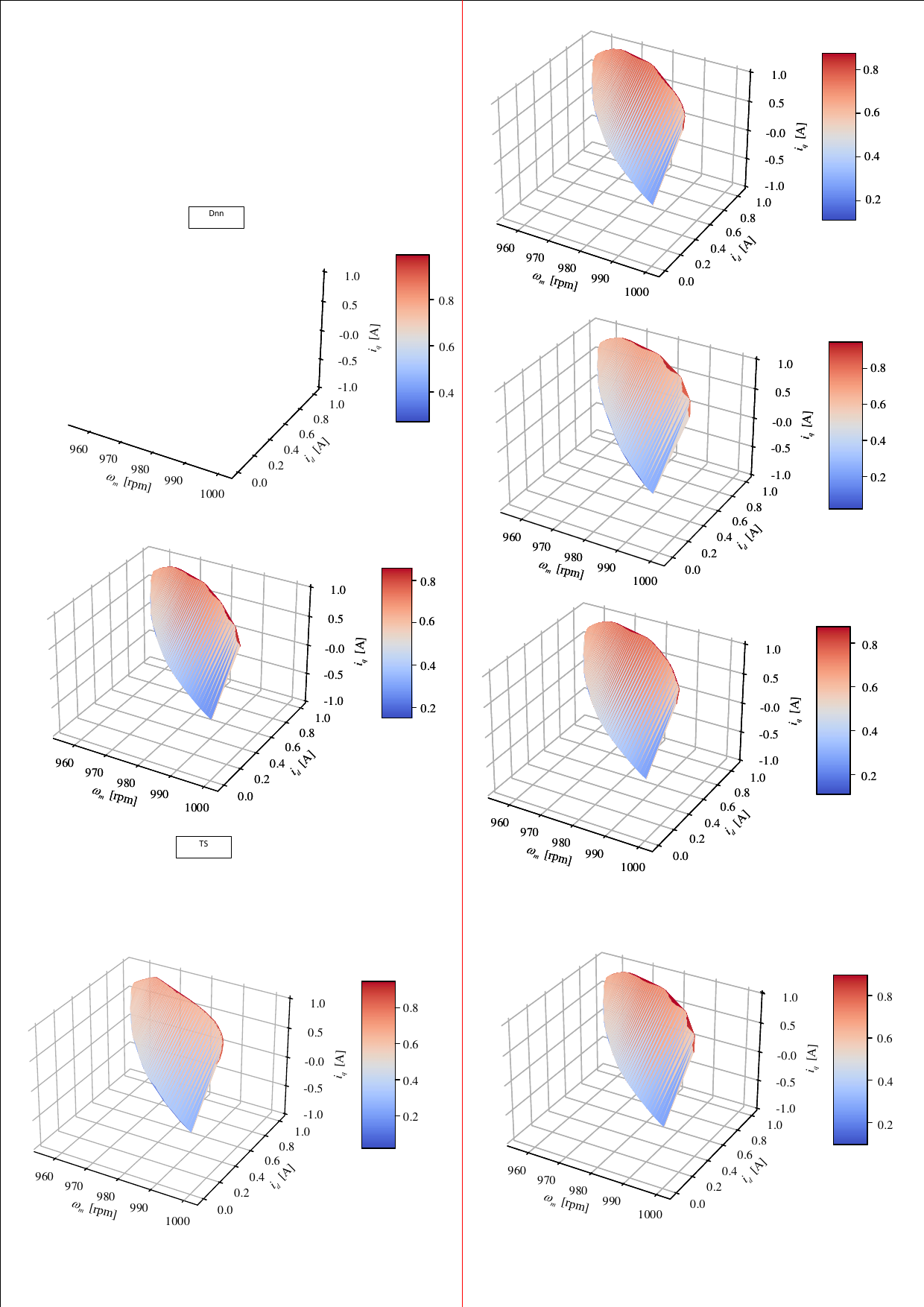}}
    \caption{Prediction comparison among  (a) \emph{Ground Truth}. (b) \emph{DNN (Ours)}.  (c) \emph{CNN}. (d) \emph{Transformer}. }\label{fig.15}
\end{figure} 

\begin{table}[!t]
\vspace{0cm}  
\small
\centering
\caption{Loss/R$^2$ comparison among  DNN, CNN, and Transformer}
  \renewcommand{\arraystretch}{1}
  \begin{tabular}{cccc}
 \toprule[1.2pt]
   \textbf{Model} &  \textbf{MAE}&\textbf{RMSE}&\textbf{R$^2$}\\ 
    \toprule[1.2pt]
    DNN (Ours) & 0.010 &0.014 &0.997\\
    CNN  & 0.063 &0.109 &0.974 \\
    Transformer  & 0.046 &0.067 &0.987 \\
  \hline 
   \end{tabular} \label{tab:comparison:case II}
  \vspace{-0.15cm}  
\end{table}

\emph{Case III}:
To assess the prediction robustness of the DNN under various load conditions, which is important for practical applications, we conducted an experiment under different load disturbances at 1000 rpm. The load disturbances are selected as: 1) Step load with 1 N$\cdot$m; 2) Ramp load with slope 1 N$\cdot$m$/$s, and the final value is 1 N$\cdot$m;  3) Sinusoidal load $T_{Lsin}(t) = \sin(20\pi t)$.

Fig. \ref{fig.three_disturbance} illustrates the prediction performance of the DNN under different load disturbances. Task (a), involving a step load disturbance, presents a significant challenge due to the presence of bends in the true current trajectory (phase plane). Nevertheless, our model can predict this behavior with high accuracy. In task (b), the difference between the predicted values and the ground truth is minor, further demonstrating the strong predictive capabilities of the DNN. Task (c) features a sinusoidal load disturbance, which is a common scenario in practical applications. As shown in  Fig. \ref{fig.three_disturbance}(c), the DNN provides satisfactory predictions, with only a noticeable non-overlapping section between the predicted and true trajectories at the lower left corner of the phase plane. The corresponding MAE, RMSE, and R$^2$ values for the tasks are presented in Fig.~\ref{fig.three_disturbance}(d). 

The proposed DNN outperforms the competing models for two main reasons: First, it is formulated in continuous time, which aligns more naturally with the DNN model’s dynamics without the constraints of discretization. Second, the DNN model’s structure, with one linear and two nonlinear terms, closely resembles that of the PMSM. This similarity reduces the modeling gap and mitigates learning challenges.

\begin{figure}[!htb]
    \centering    
\subfigure[]{
           \includegraphics[width=0.48\hsize]{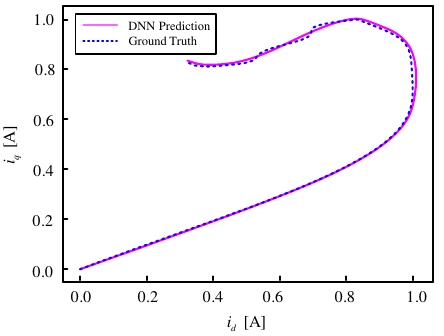}} 
\subfigure[]{
           \includegraphics[width=0.48\hsize]{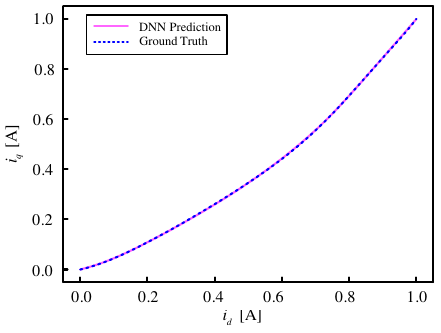}}\\
\subfigure[]{
            \includegraphics[width=0.465\hsize]{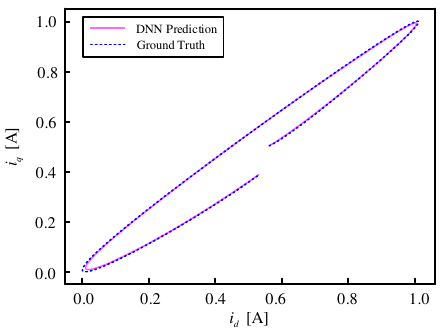}}    
\subfigure[]{
          \includegraphics[width=0.495\hsize]{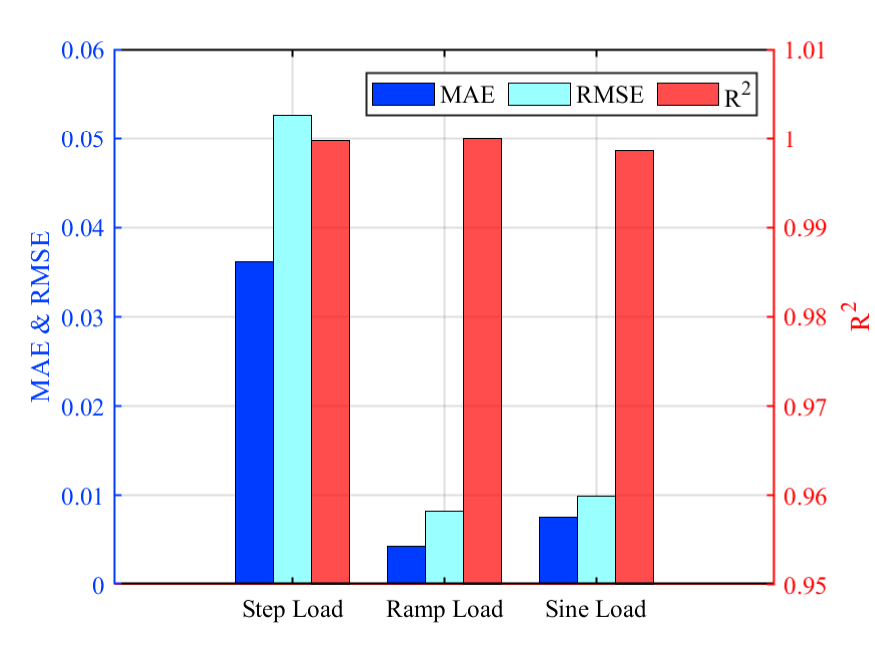}} 
    \caption{DNN prediction results under different load disturbances: (a) \emph{Step load disturbance}. (b) \emph{Ramp load disturbance}. (c) \emph{Sinusoidal load disturbance}. (d) \emph{Loss/R$^2$ values under the three load disturbances}.}\label{fig.three_disturbance}
\end{figure}

\section{Conclusion}\label{Conclusion} \label{sec:conclusion}
In this brief, we introduced a novel architecture of differential neural networks (DNNs) for modeling and predicting the behavior of nonlinear dynamical systems, specifically focusing on permanent magnet synchronous motors (PMSMs). Our approach demonstrates significant efficacy in reconstructing the original systems and provides robust short-term and long-term prediction capabilities. The experimental validation under varied load disturbances and no-load conditions highlights the practical utility and accuracy of our method.

However, several limitations have to be acknowledged. Firstly, the current study primarily focuses on PMSMs, and the generalizability of the proposed model to other nonlinear systems remains an area for future exploration. Additionally, while the approach exhibits robustness in the tested scenarios, the performance under extreme or unforeseen disturbances has not been thoroughly investigated. The computational complexity associated with training the DNN model may also pose challenges in real-time applications.

Future work will address these limitations by extending the methodology to a broader range of real systems and exploring its performance in more diverse and extreme conditions. Furthermore, integrating the proposed approach with advanced techniques in system identification and control could provide deeper insights and broader applicability in many fields. 

\bibliographystyle{IEEEtran}
\bibliography{IEEEtrans}

\end{document}